\newtheorem{proposition}{Proposition}
\ifcvprfinal\pagestyle{empty}\fi
\begin{document}

\title{AdderNet: Do We Really Need Multiplications in Deep Learning?}

\author{
	Hanting Chen$^{1,2}$\thanks{Equal contribution}, Yunhe Wang$^{2*}$, Chunjing Xu$^2$\thanks{Corresponding author}, Boxin Shi$^{3,4}$, Chao Xu$^1$, Qi Tian$^2$, Chang Xu$^5$\\
	\normalsize$^1$ Key Lab of Machine Perception (MOE), CMIC, School of EECS, Peking University, China\\
	\normalsize$^2$ Huawei Noah's Ark Lab, 	$^3$ National Engineering Laboratory for Video Technology, Peking University,\\ \normalsize$^4$ Peng Cheng Laboratory,
	$^5$ School of Computer Science, Faculty of Engineering, The University of Sydney, Australia\\
	\small\texttt{\{htchen, shiboxin\}@pku.edu.cn, xuchao@cis.pku.edu.cn, c.xu@sydney.edu.au}\\
	\small\texttt{\{yunhe.wang, xuchunjing, tian.qi1\}@huawei.com} \\ 
}

\maketitle

\begin{abstract}
   Compared with cheap addition operation, multiplication operation is of much higher computation complexity. The widely-used convolutions in deep neural networks are exactly cross-correlation to measure the similarity between input feature and convolution filters, which involves massive multiplications between float values. In this paper, we present adder networks (AdderNets) to trade these massive multiplications in deep neural networks, especially convolutional neural networks (CNNs), for much cheaper additions to reduce computation costs. In AdderNets, we take the $\ell_1$-norm distance between filters and input feature as the output response. The influence of this new similarity measure on the optimization of neural network have been thoroughly analyzed. To achieve a better performance, we develop a special back-propagation approach for AdderNets by investigating the full-precision gradient. We then propose an adaptive learning rate strategy to enhance the training procedure of AdderNets according to the magnitude of each neuron's gradient. As a result, the proposed AdderNets can achieve 74.9\% Top-1 accuracy 91.7\% Top-5 accuracy using ResNet-50 on the ImageNet dataset without any multiplication in convolution layer. The codes are publicly available at: \url{https://github.com/huaweinoah/AdderNet}.
\end{abstract}

\section{Introduction}
Given the advent of Graphics Processing Units (GPUs), deep convolutional neural networks (CNNs) with billions of floating number multiplications could receive speed-ups and make important strides in a large variety of computer vision tasks, \eg image classification~\cite{VGG,krizhevsky2012imagenet}, object detection~\cite{ren2015faster}, segmentation~\cite{long2015fully}, and human face verification~\cite{wen2016discriminative}. However, the high-power consumption of these high-end GPU cards (\eg 250W+ for GeForce RTX 2080 Ti) has blocked modern deep learning systems from being deployed on mobile devices, \eg smart phone, camera, and watch. Existing GPU cards are far from svelte and cannot be easily mounted on mobile devices. Though the GPU itself only takes up a small part of the card, we need many other hardware for supports, \eg memory chips, power circuitry, voltage regulators and other controller chips. It is therefore necessary to study efficient deep neural networks that can run with affordable computation resources on mobile devices. 

Addition, subtraction, multiplication and division are the four most basic operations in mathematics. It is widely known that multiplication is slower than addition, but most of the computations in deep neural networks are multiplications between float-valued weights and float-valued activations during the forward inference. There are thus many papers on how to trade multiplications for additions, to speed up deep learning. The seminal work~\cite{courbariaux2015binaryconnect} proposed BinaryConnect to force the network weights to be binary (\eg -1 or 1), so that many multiply-accumulate operations can be replaced by simple accumulations.  After that, Hubara~\etal~\cite{hubara2016binarized} proposed BNNs, which binarized not only weights but also activations in convolutional neural networks at run-time. Moreover, Rastegari~\etal~\cite{rastegari2016xnor} introduced scale factors to approximate convolutions using binary operations and outperform~\cite{hubara2016binarized,rastegari2016xnor} by large margins. Zhou~\etal~\cite{zhou2016dorefa} utilized low bit-width gradient to accelerate the training of binarized networks. Cai~\etal~\cite{cai2017deep} proposed an half-wave Gaussian quantizer for forward approximation, which achieved much closer performance to full precision networks.

Though binarizing filters of deep neural networks significantly reduces the computation cost, the original recognition accuracy often cannot be preserved. In addition, the training procedure of binary networks is not stable and usually requests a slower convergence speed with a small learning rate. Convolutions in classical CNNs are actually cross-correlation to measure the similarity of two inputs. Researchers and developers are used to taking convolution as a default operation to extract features from visual data, and introduce various methods to accelerate the convolution, even if there is a risk of sacrificing network capability. But there is hardly no attempt to replace convolution with another more efficient similarity measure that is better to only involve additions. In fact, additions are of much lower computational complexities than multiplications. Thus, we are motivated to investigate the feasibility of replacing multiplications by additions in convolutional neural networks.

\begin{figure*}[t]
	\centering
	\begin{tabular}{cc}
		\includegraphics[width=0.48\linewidth]{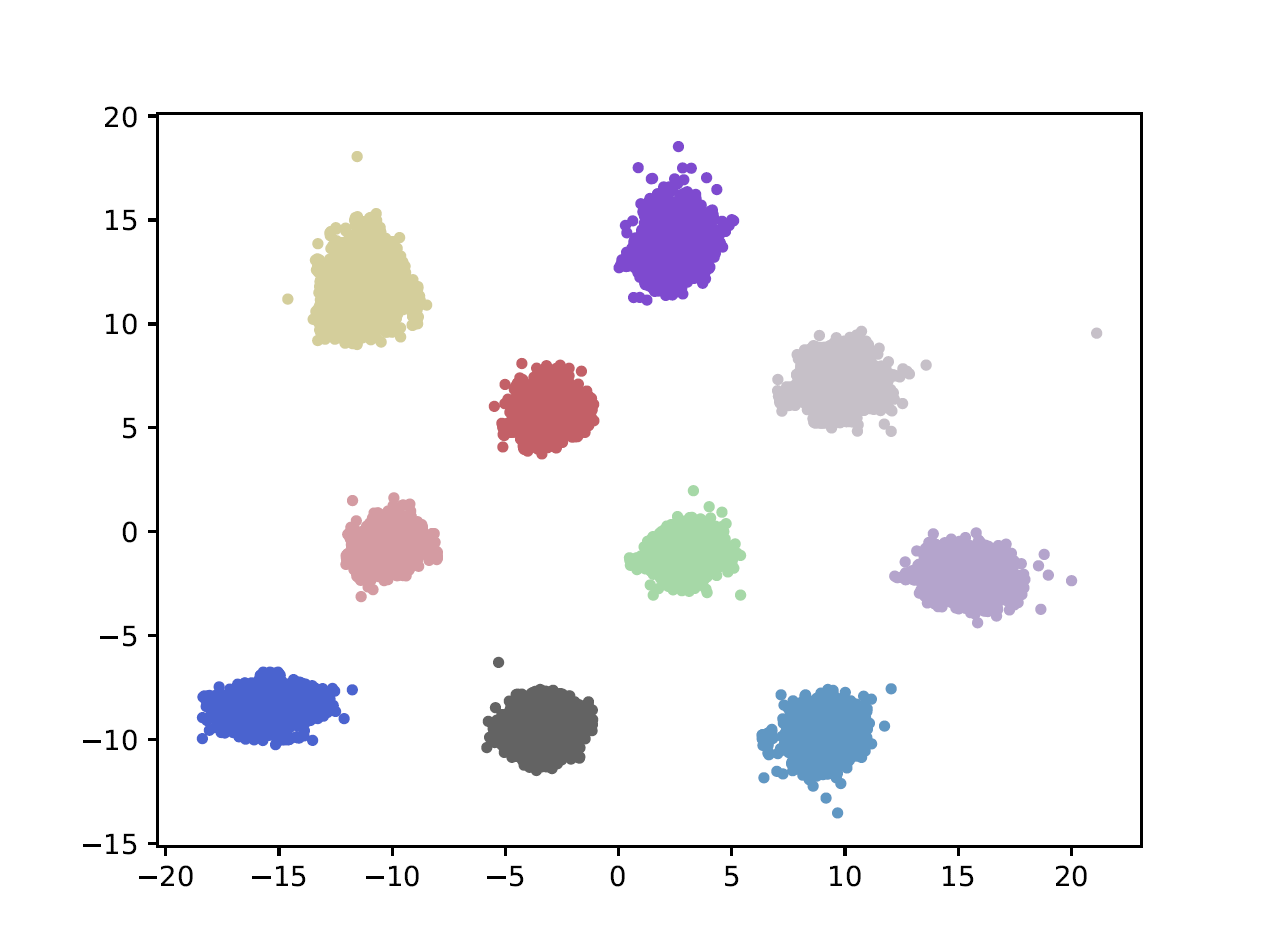} &
		\quad \includegraphics[width=0.48\linewidth]{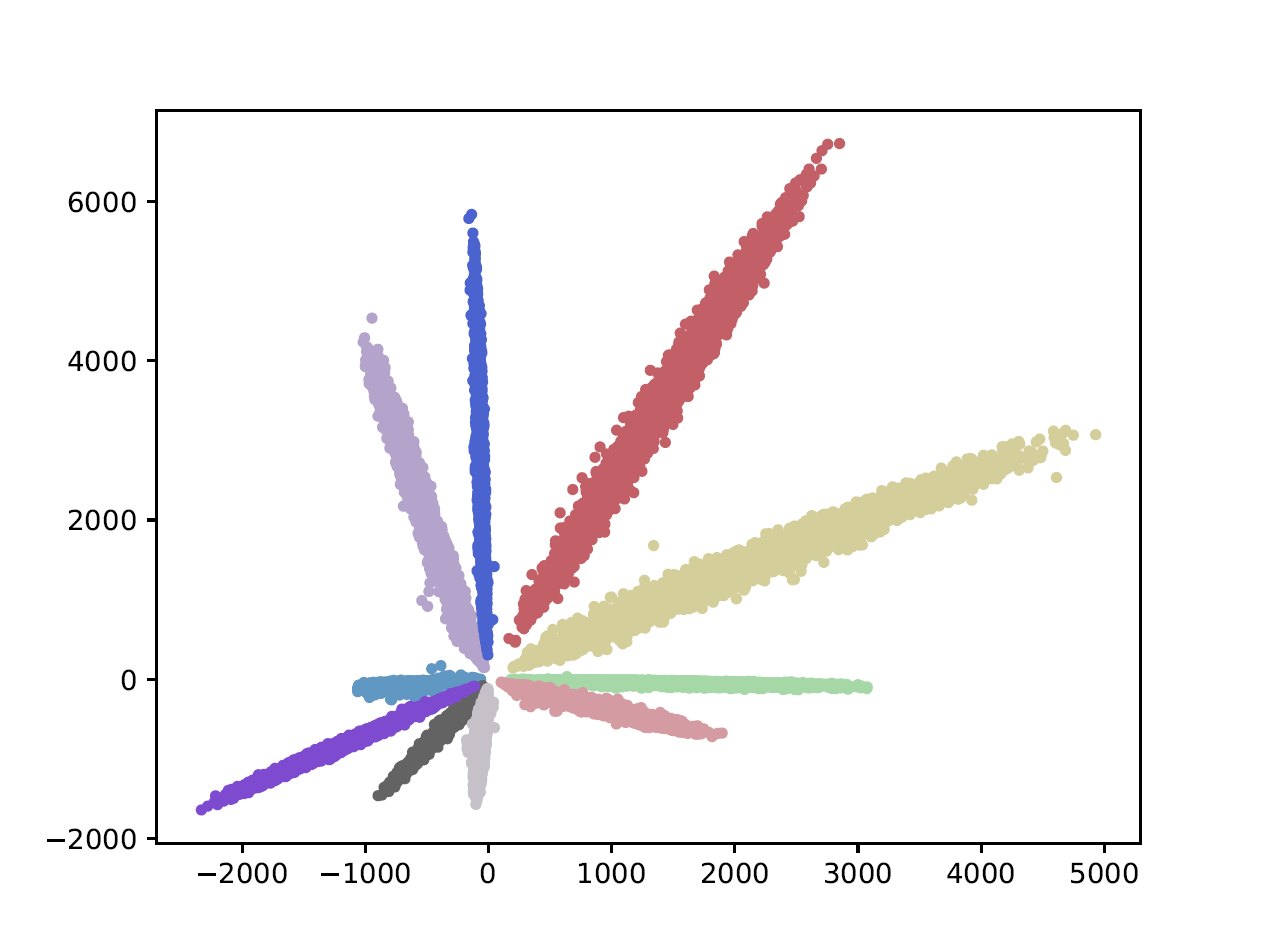} \\
		(a) Visualization of features in AdderNets  &(b)  Visualization of features in CNNs  \\
	\end{tabular}
	\caption{Visualization of features in AdderNets and CNNs. Features of CNNs in different classes are divided by their angles. In contrast, features of AdderNets tend to be clustered towards different class centers, since AdderNets use the $\ell_1$-norm to distinguish different classes. The visualization results suggest that $\ell_1$-distance can served as a similarity measure the distance between the filter and the input feature in deep neural networks}
	\label{Fig:visualfea}
	\vspace{-1em}
\end{figure*}

In this paper, we propose adder networks that maximize the use of addition while abandoning convolution operations. Given a series of small template as ``filters’’ in the neural network, $\ell_1$-distance could be an efficient measure to summarize absolute differences between the input single and the template as shown in Figure~\ref{Fig:visualfea}. Since subtraction can be easily implemented through addition by using its complement code, $\ell_1$-distance could be a hardware-friendly measure that only has additions, and naturally becomes an efficient alternative of the convolution to construct neural networks. An improved back-propagation scheme with regularized gradients is designed to ensure sufficient updates of the templates and a better network convergence. The proposed AdderNets are deployed on several benchmarks, and experimental results demonstrate AdderNets’ advantages in accelerating inference of deep neural networks while preserving comparable recognition accuracy to conventional CNNs.

This paper is organized as follows. Section~\ref{sec:related} investigates related works on network compression. Section~\ref{sec:method} proposes Adder Networks which replace the multiplication in the conventional convolution filters with addition. Section~\ref{sec:experi} evaluates the proposed AdderNets on various benchmark datasets and models and Section~\ref{sec:conclu} concludes this paper.

\section{Related works}\label{sec:related}

To reduce the computational complexity of convolutional neural networks, a number of works have been proposed for eliminating useless calculations. 

Pruning based methods aims to remove redundant weights to compress and accelerate the original network. Denton~\etal~\cite{SVD} decomposed weight matrices of fully-connected layers into simple calculations by exploiting singular value decomposition (SVD). Han~\etal~\cite{han2015deep} proposed discarding subtle weights in pre-trained deep networks to omit their original calculations without affecting the performance. Wang~\etal~\cite{wang2016cnnpack} further converted convolution filters into the DCT frequency domain and eliminated more floating number multiplications. In addition, Hu~\etal~\cite{Trimming} discarded redundant filters with less impacts to directly reduce the computations brought by these filters. Luo~\etal~\cite{luo2017thinet} discarded redundant filters according to the reconstruction error. He~\etal~\cite{he2017channel} utilized a LASSO regression to select important channels by solving least square reconstruction. Zhuang~\etal~\cite{zhuang2018discrimination} introduce additional losses to consider the discriminative power of channels and selected the most discriminative channels for the portable network. 

Instead of directly reducing the computational complexity of a pre-trained heavy neural network, lot of works focused on designing novel blocks or operations to replace the conventional convolution filters. Iandola~\etal~\cite{iandola2016squeezenet} introduced a bottleneck architecture to largely decrease the computation cost of CNNs. Howard~\etal~\cite{howard2017mobilenets} designed MobileNet, which decompose the conventional convolution filters into the point-wise and depth-wise convolution filters with much fewer FLOPs. Zhang~\etal~\cite{zhang2018shufflenet} combined group convolutions~\cite{ResNeXt} and a channel shuffle operation to build efficient neural networks with fewer computations. Hu~\etal~\cite{hu2018squeeze} proposed the squeeze and excitation block, which focuses on the relationship of channels by modeling interdependencies between channels, to improve the performance at slight additional computational cost. Wu~\etal~\cite{wu2018shift} presented a parameter-free ``shift" operation with zero flop and zero parameter to replace conventional filters and largely reduce the computational and storage cost of CNNs. Zhong~\etal~\cite{zhong2018shift} further pushed the shift-based primitives into channel shift, address shift and shortcut shift to reduce the inference time on GPU while keep the performance. Wang~\etal~\cite{Versatile} developed versatile convolution filters to generate more useful features utilizing fewer calculations and parameters.  

Besides eliminating redundant weights or filters in deep convolutional neural networks, Hinton~\etal~\cite{hinton2015distilling} proposed the knowledge distillation (KD)
scheme, which transfer useful information from a heavy teacher network to a portable student network by minimizing the Kullback-Leibler divergence between their outputs. Besides mimic the final outputs of the teacher networks, Romero~\etal~\cite{romero2014fitnets} exploit the hint layer to distill the information in features of the teacher network to the student network. You~\etal~\cite{you2017learning} utilized multiple teachers to guide the training of the student network and achieve better performance. Yim~\etal~\cite{yim2017gift} regarded the relationship between features from two layers in the teacher network as a novel knowledge and introduced the FSP (Flow of Solution Procedure) matrix to transfer this kind of information to the student network.

Nevertheless, the compressed networks using these algorithms still contain massive multiplications, which costs enormous computation resources. As a result, subtractions or additions are of much lower computational complexities when compared with multiplications. However, they have not been widely investigated in deep neural networks, especially in the widely used convolutional networks. Therefore, we propose to minimize the numbers of multiplications in deep neural networks by replacing them with subtractions or additions.

\section{Networks without Multiplication}\label{sec:method}

Consider a filter $F\in \mathbb{R}^{d\times d\times c_{in}\times c_{out}}$ in an intermediate layer of the deep neural network, where kernel size is $d$, input channel is $c_{in}$ and output channel is $c_{out}$. The input feature is defined as $X\in \mathbb{R}^{H\times W \times c_{in}}$, where $H$ and $W$ are the height and width of the feature, respectively. The output feature $Y$ indicates the similarity between the filter and the input feature,  
\begin{equation}
\small
Y(m,n,t) = \sum_{i=0}^{d}\sum_{j=0}^{d}\sum_{k=0}^{c_{in}} S\big(X(m+i,n+j,k), F(i,j,k,t)\big), \label{fcn:conv}
\end{equation} 
where $S(\cdot, \cdot)$ is a pre-defined similarity measure.  If cross-correlation is taken as the metric of distance, \ie $S(x, y) = x \times y$, Eq. (\ref{fcn:conv}) becomes the convolution operation. Eq. (\ref{fcn:conv}) can also implies the calculation of a fully-connected layer when $d=1$. In fact, there are many other metrics to measure the distance between the filter and the input feature. However, most of these metrics involve multiplications, which bring in more computational cost than additions.

\subsection{Adder Networks}

We are therefore interested in deploying distance metrics that maximize the use of additions.  $\ell_1$ distance calculates the sum of the absolute differences of two points’ vector representations, which contains no multiplication. Hence, by calculating $\ell_1$ distance between the filter and the input feature, Eq. (\ref{fcn:conv}) can be reformulated as 
\begin{equation}
\small
Y(m,n,t) = -\sum_{i=0}^{d}\sum_{j=0}^{d}\sum_{k=0}^{c_{in}} \vert X(m+i,n+j,k) - F(i,j,k,t)\vert. \label{fcn:l1}
\end{equation}
Addition is the major operation in $\ell_1$ distance measure, since subtraction can be easily reduced to addition by using complement code. With the help of $\ell_1$ distance, similarity between the filters and features can be efficiently computed. 

Although both $\ell_1$ distance (Eq.  (\ref{fcn:l1}) and cross-correlation in Eq. (\ref{fcn:conv}) can measure the similarity between filters and inputs, there are some differences in their outputs. The output of a convolution filter, as a weighted summation of values in the input feature map, can be positive or negative, but the output of an adder filter is always negative. Hence, we resort to batch normalization for help, and the output of adder layers will be normalized to an appropriate range and all the activation functions used in conventional CNNs can then be used in the proposed AdderNets. Although the batch normalization layer involves multiplications, its computational cost is significantly lower than that of the convolutional layers and can be omitted. Considering a convolutional layer with a filter $F\in \mathbb{R}^{d\times d\times c_{in}\times c_{out}}$, an input $X\in \mathbb{R}^{H\times W \times c_{in}}$ and an output $Y\in \mathbb{R}^{H'\times W' \times c_{out}}$, the computation complexity of convolution and batch normalization is $\mathcal{O}(d^2c_{in}c_{out}HW)$ and $\mathcal{O}(c_{out}H'W')$, respectively. In practice, given an input channel number $c_{in}=512$ and a kernel size $d=3$ in ResNet~\cite{he2016deep}, we have $\frac{d^2c_{in}c_{out}HW}{c_{out}H'W'}\approx 4068$. Since batch normalization layer has been widely used in the state-of-the-art convolutional neural networks, we can simply upgrade these networks into AddNets by replacing their convolutional layers into adder layers to speed up the inference and reduces the energy cost. 

Intuitively, Eq. (\ref{fcn:conv}) has a connection with template matching~\cite{brunelli2009template} in computer vision, which aims to find the parts of an image that match the template. $F$ in Eq. (\ref{fcn:conv}) actually works as a template, and we calculate its matching scores with different regions of the input feature $X$. Since various metrics can be utilized in template matching, it is natural that $\ell_1$ distance can be utilized to replace the cross-correlation in Eq. (\ref{fcn:conv}).

\subsection{Optimization}

Neural networks utilize back-propagation to compute the gradients of filters and stochastic gradient descent to update the parameters. In CNNs, the partial derivative of output features $Y$ with respect to the filters $F$ is calculated as:
\begin{equation}
\frac{\partial Y(m,n,t)}{\partial F(i,j,k,t)} = X(m+i,n+j,k), 
\end{equation} 
where $i\in [m,m+d]$ and $j \in [n,n+d]$. To achieve a better update of the parameters, it is necessary to derive informative gradients for SGD. In AdderNets, the partial derivative of $Y$ with respect to the filters $F$ is:
\begin{equation}
\frac{\partial Y(m,n,t)}{\partial F(i,j,k,t)} = \mbox{sgn} (X(m+i,n+j,k) - F(i,j,k,t)), \label{fcn:l1bp}
\end{equation}  
where $\mbox{sgn}(\cdot)$ denotes the sign function and the value of the gradient can only take +1, 0, or -1.

Considering the derivative of $\ell_2$-norm 
\begin{equation}
\frac{\partial Y(m,n,t)}{\partial F(i,j,k,t)} =  X(m+i,n+j,k) - F(i,j,k,t), \label{fcn:l2bp}
\end{equation} 
Eq. (\ref{fcn:l1bp}) can therefore lead to a signSGD~\cite{bernstein2018signsgd} update of $\ell_2$-norm. However, signSGD almost never takes the direction of steepest descent and the direction only gets worse as dimensionality grows~\cite{bernstein2018convergence}. It is unsuitable to optimize the neural networks of a huge number of parameters using signSGD. Therefore, we propose using Eq. (\ref{fcn:l2bp}) to update the gradients in our AdderNets. The convergence of taking these two kinds of gradient will be further investigated in the supplementary material. Therefore, by utilizing the full-precision gradient, the filters can be updated precisely. 

Besides the gradient of the filters, the gradient of the input features $X$ is also important for the update of parameters. Therefore, we also use the full-precision gradient (Eq. (\ref{fcn:l2bp})) to calculate the gradient of $X$. However, the magnitude of the full-precision gradient may be larger than +1 or -1. Denote the filters and inputs in layer $i$ as $F_i$ and $X_i$. Different from $\frac{\partial Y}{\partial F_i}$ which only affects the gradient of $F_i$ itself, the change of $\frac{\partial Y}{\partial X_i}$ would influence the gradient in not only layer $i$ but also layers before layer $i$ according to the gradient chain rule. If we use the full-precision gradient instead of the sign gradient of $\frac{\partial Y}{\partial X}$ for each layer, the magnitude of the gradient in the layers before this layer would be increased, and the discrepancy brought by using full-precision gradient would be magnified. To this end, we clip the gradient of $X$ to $[-1,1]$ to prevent gradients from exploding. Then the partial derivative of output features $Y$ with respect to the input features $X$ is calculated as:
\begin{equation}
\small
\frac{\partial Y(m,n,t)}{\partial X(m+i,n+j,k)} =  \mbox{HT}(F(i,j,k,t)-X(m+i,n+j,k)) .
\label{fcn:l2bpx}
\end{equation} 
where $\mbox{HT}(\cdot)$ denotes the HardTanh function:
\begin{equation}
\mbox{HT}(x) =    
\left\{
\begin{aligned}
& x \quad&\mbox{if} \quad -1<x<1,\\
& 1   &x>1,\\
& -1   &x<-1.
\end{aligned} 
\right.
\end{equation}

\subsection{Adaptive Learning Rate Scaling}\label{sec:2.3}

In conventional CNNs, assuming that the weights and the input features are independent and identically distributed following normal distribution, the variance of the output can be roughly estimated as:
\begin{equation}
\begin{aligned}
Var[Y_{CNN}]& =  \sum_{i=0}^{d}\sum_{j=0}^{d}\sum_{k=0}^{c_{in}} Var[X\times F] \\
&= d^2c_{in} Var[X]Var[F].
\end{aligned}
\label{varcnn}
\end{equation} 
If variance of the weight is $Var[F]= \frac{1}{d^2c_{in}}$, the variance of output would be consistent with that of the input, which will be beneficial for the information flow in the neural network. In contrast, for AdderNets, the variance of the output can be approximated as:
\begin{equation}
\begin{aligned}
Var[Y_{AdderNet}] &= 
\sum_{i=0}^{d}\sum_{j=0}^{d}\sum_{k=0}^{c_{in}} Var[| X -F|] \\
&= \sqrt{\frac{\pi}{2}}d^2c_{in} (Var[X]+Var[F]),
\end{aligned}
\label{varadd}
\end{equation} 
when $F$ and $X$ follow normal distributions. In practice, the variance of weights $Var[F]$ is usually very small~\cite{glorot2010understanding}, \eg $10^{-3}$ or $10^{-4}$ in an ordinary CNN. Hence, compared with multiplying $ Var[X]$ with a small value in Eq. (\ref{varcnn}), the addition operation in Eq. (\ref{varadd}) tends to bring in a much larger variance of outputs in AdderNets. 

We next proceed to show the influence of this larger variance of outputs on the update of AdderNets. To promote the effectiveness of activation functions, we introduce batch normalization after each adder layer. Given input $x$ over a mini-batch $\mathcal{B} = \left\{x_{1}, \cdots, x_{m}\right\}$, the batch normalization layer can be denoted as:
\begin{equation}
y = \gamma \frac{x-\mu_{\mathcal{B}}}{\sigma_{\mathcal{B}}} + \beta,
\end{equation} 
where $\gamma$ and $\beta$ are parameters to be learned, and $\mu_{\mathcal{B}}= \frac{1}{m}\sum_i x_i$ and $\sigma^2_{\mathcal{B}}= \frac{1}{m}\sum_i (x_i-\mu_{\mathcal{B}})^2$ are the mean and variance over the mini-batch, respectively. The gradient of loss $\ell$ with respect to $x$ is then calculated as:
\begin{equation}
\small
\frac{\partial \ell}{\partial x_i} = \sum_{j=1}^{m} \frac{\gamma}{m^2 \sigma_{\mathcal{B}}} \left\{ \frac{\partial \ell}{\partial y_i} - \frac{\partial \ell}{\partial y_j}[ 1 +  \frac{(x_i-x_j)(x_j- \mu_{\mathcal{B}})}{\sigma_{\mathcal{B}} }] \right\}.
\label{fcn:bn}
\end{equation} 
Given a much larger variance $Var[Y] = \sigma_{\mathcal{B}}$ in  Eq. (\ref{varadd}), the magnitude of the gradient w.r.t $X$ in AdderNets would be much smaller than that in CNNs according to Eq. (\ref{fcn:bn}), and then the magnitude of the gradient w.r.t the filters in AdderNets would be decreased as a result of gradient chain rule.

\begin{algorithm}[t]
	\caption{The feed forward and back propagation of adder neural networks.} 
	\label{alg1} 
	\begin{algorithmic}[1] 
		\REQUIRE 
		An initialized adder network $\mathcal{N}$ and its training set $\mathcal{X}$ and the corresponding labels $\mathcal{Y}$, the global learning rate $\gamma$ and the hyper-parameter $\eta$. 
		\REPEAT
		\STATE Randomly select a batch $\{(\mbox{x},\mbox{y})\}$ from $\mathcal{X}$ and $\mathcal{Y}$;
		\STATE Employ the AdderNet $\mathcal{N}$ on the mini-batch: $\mbox{x} \rightarrow \mathcal{N}(\mbox{x})$;
		\STATE Calculate the full-precision derivative $\frac{\partial Y}{\partial F}$ and $\frac{\partial Y}{\partial X}$ for adder filters using Eq. (\ref{fcn:l2bp}) and Eq. (\ref{fcn:l2bpx});
		\STATE Exploit the chain rule to generate the gradient of parameters in $\mathcal{N}$;
		\STATE Calculate the adapative learning rate $\alpha_l$ for each adder layer according to Eq. (\ref{fcn:lr2}).
		\STATE Update the parameters in $\mathcal{N}$ using stochastic gradient descent.		
		\UNTIL convergence
		\ENSURE A well-trained adder network $\mathcal{N}$ with almost no multiplications.
	\end{algorithmic} 
\end{algorithm}

\begin{table}[h]
	\centering
	\caption{The $\ell_2$-norm of gradient of weight in each layer using different networks at 1st iteration.}
	\begin{tabular}{|c|c|c|c|}
		\hline
		\textbf{Model} & \textbf{Layer 1} & \textbf{Layer 2} & \textbf{Layer 3} \\
		\hline	
		\hline	
		AdderNet & 0.0009 & 0.0012 & 0.0146 \\
		\hline	
		CNN & 0.2261 & 0.2990 & 0.4646 \\
		\hline
	\end{tabular}
	\label{tab:weight}
\end{table}

Table~\ref{tab:weight} reports the $\ell_2$-norm of gradients of filters $\Vert F \Vert _2$ in LeNet-5-BN using CNNs and AdderNets on the MNIST dataset during the 1st iteration. LeNet-5-BN denotes the LeNet-5~\cite{lenet} adding an batch normalization layer after each convolutional layer. As shown in this table, the norms of gradients of filters in AdderNets are much smaller than that in CNNs, which could slow down the update of filters in AdderNets. 

\begin{table*}[t]
	\centering
	\caption{Classification results on the CIFAR-10 and CIFAR-100 datasets.}
	\begin{tabular}{|c|c|c|c|c|c|c|}
		\hline
		\textbf{Model}&	\textbf{Method} & \textbf{\#Mul.} & \textbf{\#Add.} & \textbf{XNOR}  & \textbf{CIFAR-10} & \textbf{CIFAR-100} \\
		\hline	
		\hline	
		&BNN & 0 & 0.65G &  0.65G & 89.80\% & 65.41\% \\
		\cline{2-7}		
		VGG-small&AddNN & 0 & 1.30G &0  & 93.72\% & 72.64\% \\
		\cline{2-7}		
		&CNN &  0.65G& 0.65G& 0 &93.80\% & 72.73\% \\
		\hline
		\hline	
		&BNN & 0 &41.17M  &  41.17M   & 84.87\% & 54.14\% \\
		\cline{2-7}	
		ResNet-20&AddNN & 0 & 82.34M& 0  & 91.84\% & 67.60\% \\
		\cline{2-7}		
		&CNN & 41.17M & 41.17M &  0 &92.25\% & 68.14\% \\
		\hline
		\hline	
		&BNN & 0 &69.12M  &  69.12M   & 86.74\% & 56.21\% \\
		\cline{2-7}	
		ResNet-32&AddNN & 0 & 138.24M& 0  & 93.01\% & 69.02\% \\
		\cline{2-7}		
		&CNN & 69.12M & 69.12M &  0 &93.29\% & 69.74\% \\
		\hline
	\end{tabular}
	\label{tab:cls}
	\vspace{-1.0em}
\end{table*}

A straightforward idea is to directly adopt a larger learning rate for filters in AdderNets. However, it is worth noticing that the norm of gradient differs much in different layers of AdderNets as shown in Table~\ref{tab:weight}, which requests special consideration of filters in different layers. To this end, we propose an adaptive learning rate for different layers in AdderNets. Specifically, the update for each adder layer $l$ is calculated by
\begin{equation}
\Delta F_l = \gamma \times \alpha_l \times \Delta L(F_l),
\label{fcn:lr1}
\end{equation}
where $\gamma$ is a global learning rate of the whole neural network (\eg for adder and BN layers), $\Delta L(F_l)$ is the gradient of the filter in layer $l$ and $\alpha_l$ is its corresponding local learning rate. As filters in AdderNets act subtraction with the inputs, the magnitude of filters and inputs are better to be similar to extract meaningful information from inputs. Because of the batch normalization layer, the magnitudes of inputs in different layers have been normalized, which then suggests a normalization for the magnitudes of filters in different layers. The local learning rate can therefore be defined as:
\begin{equation}
\alpha_l = \frac{\eta\sqrt{k}}{\Vert \Delta L(F_l)\Vert_2},
\label{fcn:lr2}
\end{equation}
where $k$ denotes the number of elements in $F_l$, and  $\eta$ is a hyper-parameter to control the learning rate of adder filters. By using the proposed adaptive learning rate scaling, the adder filters in different layers can be updated with nearly the same step. The training procedure of the proposed AdderNet is summarized in Algorithm~\ref{alg1}.

\section{Experiment}\label{sec:experi}

In this section, we implement experiments to validate the effectiveness of the proposed AdderNets on several benchmark datasets, including MNIST, CIFAR and ImageNet. Ablation study and visualization of features are provided to further investigate the proposed method. The experiments are conducted on NVIDIA Tesla V100 GPU in PyTorch.

\subsection{Experiments on MNIST}\label{sec:clas}

To illustrate the effectiveness of the proposed AdderNets, we first train a LeNet-5-BN~\cite{lenet} on the MNIST dataset. The images are resized to $32\times32$ and are pro-precessed following~\cite{lenet}. The networks are optimized using Nesterov Accelerated Gradient (NAG), and the weight decay and the momentum were set as $5\times10^{-4}$ and 0.9, respectively. We train the networks for 50 epochs using the cosine learning rate decay~\cite{loshchilov2016sgdr} with an initial learning rate 0.1. The batch size is set as 256. For the proposed AdderNets, we replace the convolutional filters in LeNet-5-BN with our adder filters. Note that the fully connected layer can be regarded as a convolutional layer, we also replace the multiplications in the fully connect layers with subtractions. We set the hyper-parameter in Eq. (\ref{fcn:lr2}) to be $\eta=0.1$, which achieves best performance compared with other values from the pool $\left\{1,\frac{1}{2},\frac{1}{5},\frac{1}{10},\frac{1}{20}\right\}$. 

The convolutional neural network achieves a $99.4\%$ accuracy with $\sim$435K multiplications and $\sim$435K additions. By replacing the multiplications in convolution with additions, the proposed AdderNet achieves a 99.4\% accuracy, which is the same as that of CNNs, with $\sim$870K additions and almost no multiplication.
In fact, the theoretical latency of multiplications in CPUs is also larger than that of additions and subtractions. There is an instruction table~\footnote{\url{www.agner.org/optimize/instruction_tables.pdf}} which lists the instruction latencies, throughputs and micro-operation breakdowns for Intel, AMD and VIA CPUs. For example, in VIA Nano 2000 series, the latency of float multiplication and addition is 4 and 2, respectively. The AdderNet using LeNet-5 model will have $\sim$1.7M latency while CNN will have $\sim$2.6M latency in this CPU. In conclusion, the AdderNet can achieve similar accuracy with CNN but have fewer computational cost and latency. Noted that CUDA and cuDNN optimized adder convolutions are not yet available, we do not compare the actual inference time.

\subsection{Experiments on CIFAR}

 We then evaluate our method on the CIFAR dataset, which consist of $32\times32$ pixel RGB color images. Since the binary networks~\cite{zhou2016dorefa} can use the XNOR operations to replace multiplications, we also compare the results of binary neural networks (BNNs). We use the same data augmentation and pro-precessing in He~\etal~\cite{he2016deep} for training and testing. Following Zhou~\etal~\cite{zhou2016dorefa}, the learning rate is set to 0.1 in the beginning and then follows a polynomial learning rate schedule. The models are trained for 400 epochs with a 256 batch size. We follow the general setting in binary networks to set the first and last layers as full-precision convolutional layers. In AdderNets, we use the same setting for a fair comparison. The hyper-parameter $\eta$ is set to 0.1 following the experiments on the MNIST dataset.

The classification results are reported in Table~\ref{tab:cls}. Since computational cost in batch normalization layer, the first layer and the last layer are significantly less than other layers, we omit these layers when counting FLOPs. We first evaluate the VGG-small model~\cite{cai2017deep} in the CIFAR-10 and CIFAR-100 dataset. As a result, the AdderNets achieve nearly the same results (93.72\% in CIFAR-10 and 72.64\% in CIFAR-100) with CNNs (93.80\% in CIFAR-10 and 72.73\% in CIFAR-100) with no multiplication. Although the model size of BNN is much smaller than those of AdderNet and CNN, its accuracies are much lower (89.80\% in CIFAR-10 and 65.41\% in CIFAR-100). We then turn to the widely used ResNet models (ResNet-20 and ResNet-32) to further investigate the performance of different networks. As for the ResNet-20, Tte convolutional neural networks achieve the highest accuracy (\ie 92.25\% in CIFAR-10 and 68.14\% in CIFAR-100) but with a large number of multiplications (41.17M). The proposed AdderNets achieve a 91.84\% accuracy in CIFAR-10 and a 67.60\% accuracy in CIFAR-100 without multiplications, which is comparable with CNNs. In contrast, the BNNs only achieve 84.87\% and 54.14\% accuracies in CIFAR-10 and CIFAR-100. The results in ResNet-32 also suggest that the proposed AdderNets can achieve similar results with conventional CNNs.

\begin{table*}[t]
	\centering
	\caption{Classification results on the ImageNet datasets.}
	\begin{tabular}{|c|c|c|c|c|c|c|}
		\hline
		\textbf{Model} & \textbf{Method} & \textbf{\#Mul.} & \textbf{\#Add.} & \textbf{XNOR} & \textbf{Top-1 Acc.} & \textbf{Top-5 Acc.} \\
		\hline	
		\hline	
		& BNN & 0 & 1.8G &  1.8G &  51.2\% &  73.2\% \\
		\cline{2-7}		
		ResNet-18&AddNN & 0 & 3.6G &0  & 67.0\% & 87.6\% \\
		\cline{2-7}	
		&CNN &  1.8G& 1.8G&  0  & 69.8\% & 89.1\% \\
		\hline
		\hline	
		& BNN & 0 & 3.9G & 3.9G & 55.8\% &  78.4\% \\
		\cline{2-7}	
		ResNet-50&AddNN & 0 & 7.7G &0  & 74.9\% & 91.7\% \\
		\cline{2-7}		
		&CNN &  3.9G& 3.9G&  0  & 76.2\% & 92.9\% \\
		\hline
	\end{tabular}
	\vspace{-1.0em}
	\label{tab:ImageNet}
\end{table*}

\begin{figure*}[t]
	\centering
	\begin{tabular}{cc}
		\includegraphics[width=0.45\linewidth]{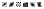} &
		\quad \includegraphics[width=0.45\linewidth]{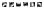} \\
		(a) Visualization of filters of AdderNets  &(b)  Visualization of filters of CNNs  \\
	\end{tabular}
	\caption{Visualization of filters in the first layer of LeNet-5-BN on the MNIST dataset. Both of them can extract useful features for image classification.}
		\vspace{-1.0em}
	\label{Fig:visualfilters}
\end{figure*}

\subsection{Experiments on ImageNet}

 We next conduct experiments on the ImageNet dataset~\cite{krizhevsky2012imagenet}, which consist of $224\times224$ pixel RGB color images. We use ResNet-18 model to evaluate the proposed AdderNets follow the same data augmentation and pro-precessing in He~\etal~\cite{he2016deep}. We train the AdderNets for 150 epochs utilizing the cosine learning rate decay~\cite{loshchilov2016sgdr}. These networks are optimized using Nesterov Accelerated Gradient (NAG), and the weight decay and the momentum are set as $10^{-4}$ and 0.9, respectively. The batch size is set as 256 and the hyper-parameter in AdderNets is the same as that in CIFAR experiments. 

Table~\ref{tab:ImageNet} shows the classification results on the ImageNet dataset by exploiting different nerual networks. The convolutional neural network achieves a 69.8\% top-1 accuracy and an 89.1\% top-5 accuracy in ResNet-18. However, there are 1.8G multiplications in this model, which bring enormous computational complexity. Since the addition operation has smaller computational cost than multiplication, we propose AdderNets to replace the multiplications in CNNs with subtractions. As a result, our AdderNet achieve a 66.8\% top-1 accuracy and an 87.4\% top-5 accuracy in ResNet-18, which demonstrate the adder filters can extract useful information from images. Rastegari~\etal~\cite{rastegari2016xnor} proposed the XNOR-net to replace the multiplications in neural networks with XNOR operations. Although the BNN can achieve high speed-up and compression ratio, it achieves only a 51.2\% top-1 accuracy and a 73.2\% top-5 accuracy in ResNet-18, which is much lower than the proposed AdderNet. We then conduct experiments on a deeper architecture (ResNet-50). The BNN could only achieve a 55.8\% top-1 accuracy and a 78.4\% top-5 accuracy using ResNet-50. In contrast, the proposed AdderNets can achieve a 74.9\% top-1 accuracy and a 91.7\% top-5 accuracy, which is closed to that of CNN (76.2\% top-1 accuracy and 92.9\% top-5 accuracy).

\subsection{Visualization Results}

\textbf{Visualization on features.} The AdderNets utilize the $\ell_1$-distance to measure the relationship between filters and input features instead of cross correlation in CNNs. Therefore, it is important to further investigate the difference of the feature space in AdderNets and CNNs. We train a LeNet++ on the MNIST dataset following~\cite{centerloss}, which has six convolutional layers and a fully-connected layer for extracting powerful 3D features. Numbers of neurons in each convolutional layer are 32, 32, 64, 64, 128, 128, and 2, respectively. For the proposed AdderNets, the last fully connected layers are replaced with the proposed add filters. 

The visualization results are shown in Figure~\ref{Fig:visualfea}. The convolutional neural network calculates the cross correlation between filters and inputs. If filters and inputs are approximately normalized, convolution operation is then equivalent to calculate cosine distance between two vectors. That is probably the reason that features in different classes are divided by their angles in Figure~\ref{Fig:visualfea}. In contrast, AdderNets utilize the $\ell_1$-norm to distinguish different classes. Thus, features tend to be clustered towards different class centers. The visualization results demonstrate that the proposed AdderNets could have the similar discrimination ability to classify images as CNNs.

\textbf{Visualization on filters.} We visualize the filters of the LeNet-5-BN network in Figure~\ref{Fig:visualfilters}. Although the AdderNets and CNNs utilize different distance metrics, filters of the proposed adder networks (see Figure~\ref{Fig:visualfilters} (a)) still share some similar patterns with convolution filters (see Figure~\ref{Fig:visualfilters} (b)). The visualization experiments further demonstrate that the filters of AdderNets can effectively extract useful information from the input images and features. 

\textbf{Visualization on distribution of weights.} We then visualize the distribution of weights for the 3th convolution layer on LeNet-5-BN. As shown in Figure~\ref{Fig:his}, the distribution of weights with AdderNets is close to a Laplace distribution while that with CNNs looks more like a Gaussian distribution. In fact, the prior distribution of $\ell_1$-norm is Laplace distribution~\cite{stigler1986history} and that of $\ell_2$-norm is Gaussian distribution~\cite{rennie2003l2} and the $\ell_2$-norm is exactly same as the cross correlation, which will be analyzed in the supplemental material. 

\begin{figure*}[t]
	\centering
	\begin{tabular}{cc}
		\includegraphics[width=0.46\linewidth]{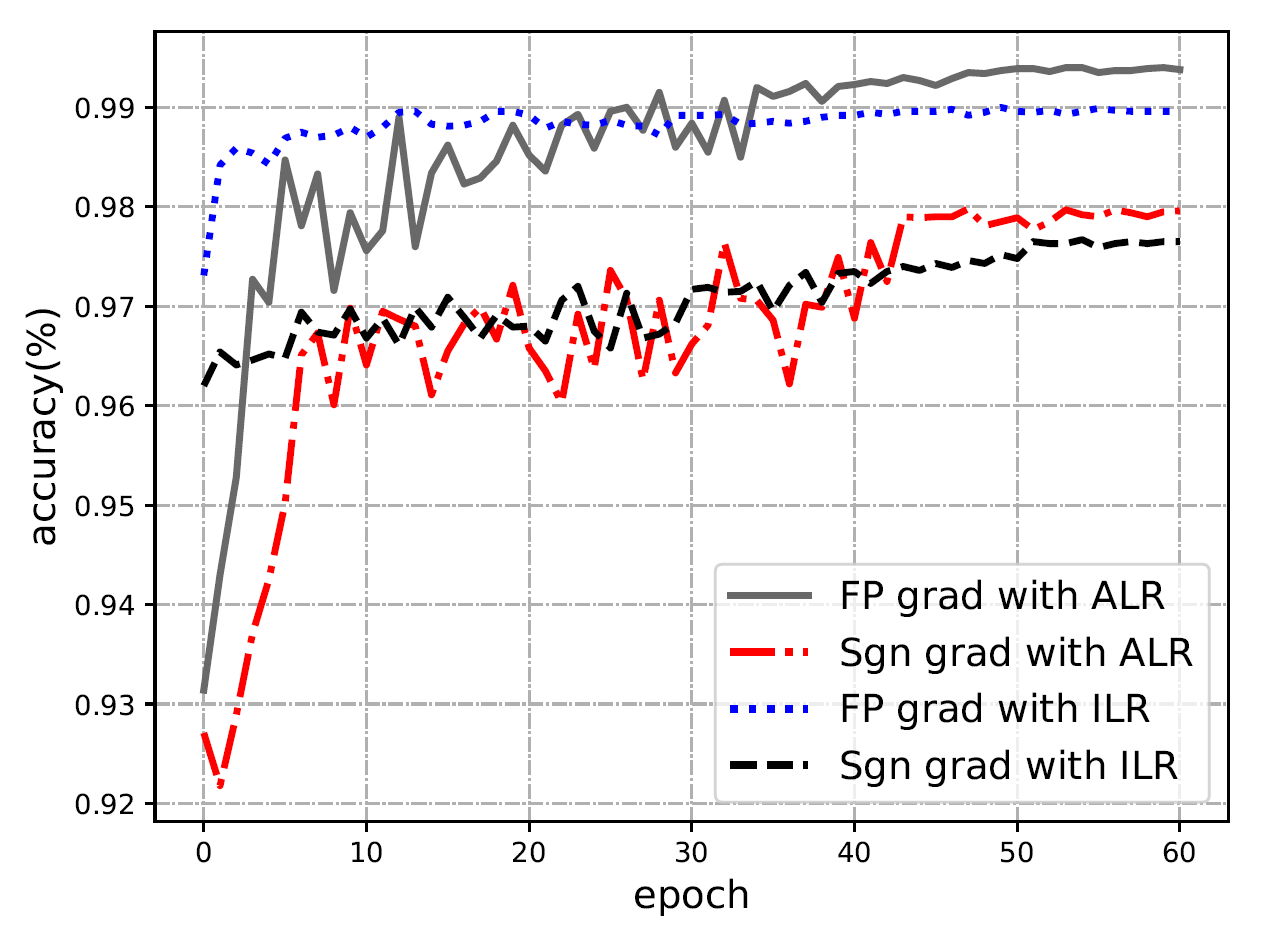} &
		\quad \includegraphics[width=0.46\linewidth]{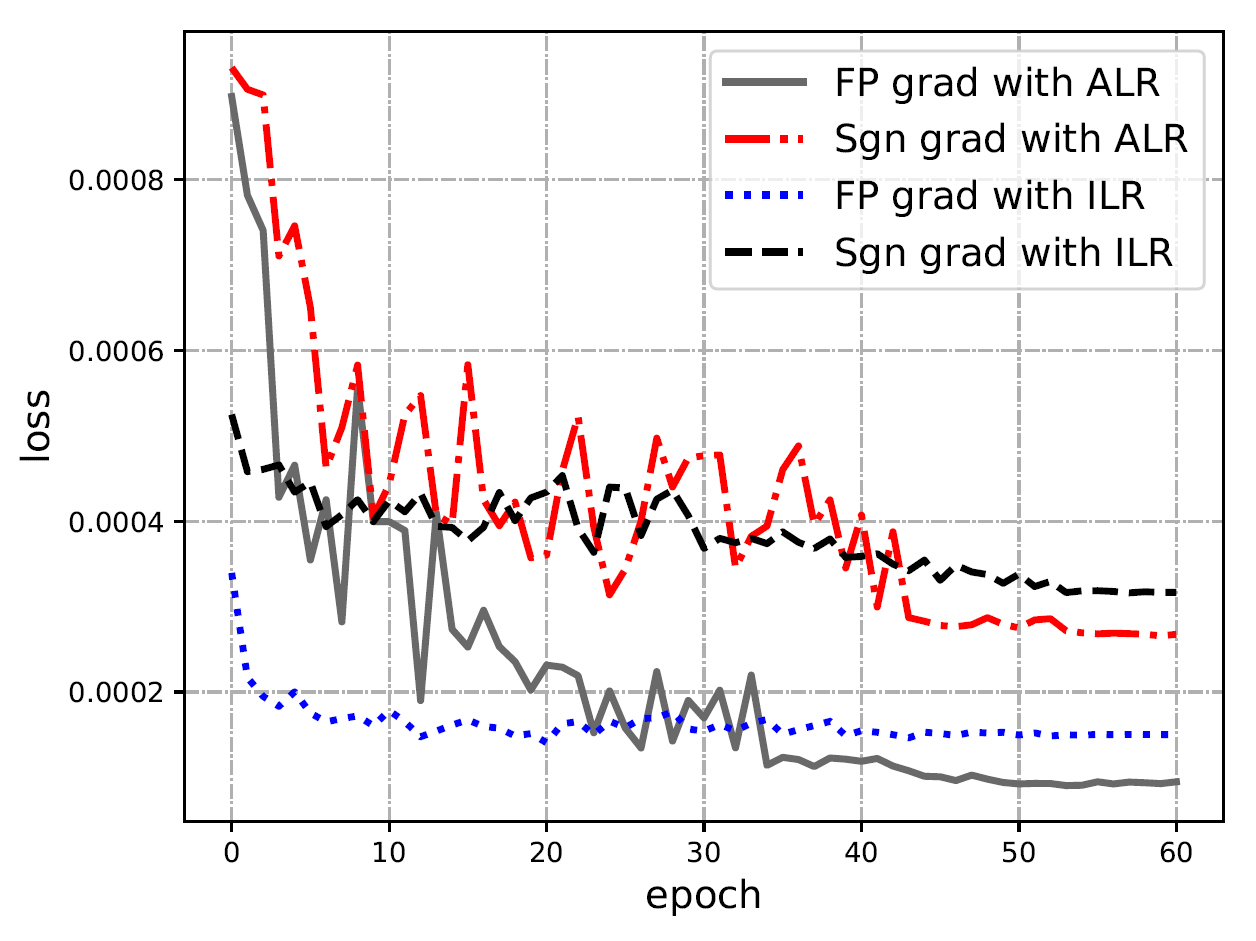} \\
		(a) Accuracy   &(b)  Loss  
	\end{tabular}
	\caption{Learning curve of AdderNets using different optimization schemes. FP and Sgn gradient denotes the full-precision and sign gradient. The proposed adaptive learning rate scaling with full-precision gradient achieves the highest accuracy (99.40\%) with the smallest loss.}
	\label{Fig:abl}
		\vspace{-1.0em}
\end{figure*}
\begin{figure}[h]
	\centering
	\includegraphics[width=1.0\linewidth]{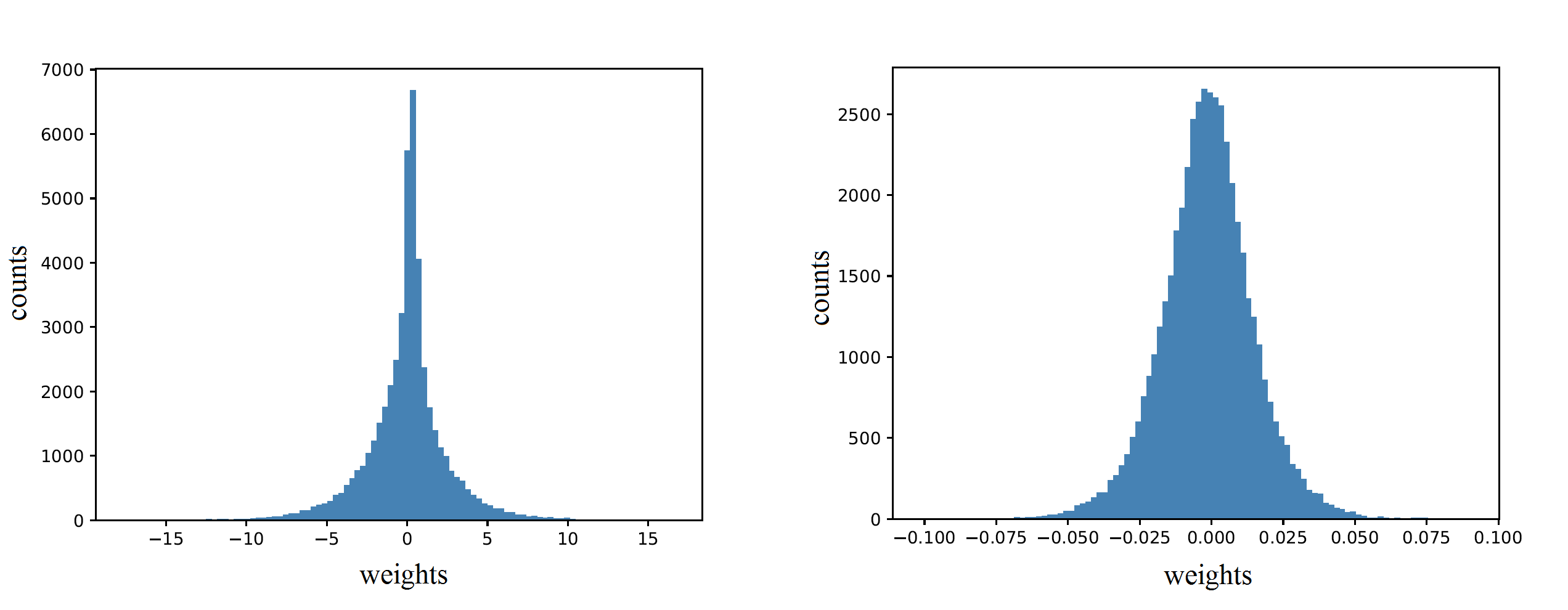}
	\caption{Histograms over the weights with AdderNet (left) and CNN (right). The weights of AdderNets follow Laplace distribution while those of CNNs follow Gaussian distribution. }
	\label{Fig:his}
	\vspace{-1.0em}
\end{figure}

\subsection{Ablation Study}

We propose to use a full-precision gradient to update the filters in our adder filters and design an adaptive learning rate scaling for deal with different layers in AdderNets. It is essential to evaluate the effectiveness of these components. We first train the LeNet-5-BN without changing its learning rate, which results in 54.91\% and 29.26\% accuracies using full-precision gradient and sign gradient, respectively. The networks can be hardly trained since its gradients are very small. Therefore, it is necessary to increase the learning rate of adder filters.

We directly increase the learning rate for filters in AdderNets by 100, which achieves best performance with full-precision gradient compared with other values from the pool $\left\{10,50,100,200,500\right\}$. As shown in Figure~\ref{Fig:abl}, the AdderNets using adaptive learning rate (ALR) and increased learning rate (ILR) achieve 97.99\% and 97.72\% accuracy with sign gradient, which is much lower than the accuracy of CNN (99.40\%). Therefore, we propose the full-precision gradient to precisely update the weights in AdderNets. As a result, the AdderNet with ILR achieves a 98.99\% accuracy using the full-precision gradient. By using the adaptive learning rate (ALR), the AdderNet can achieve a 99.40\% accuracy, which demonstrate the effectiveness of the proposed ALR method. 

\begin{table}[h]
	\centering
	\caption{The impact of parameter $\eta$ using LeNet-5-BN on the MNIST dataset.}
	\begin{tabular}{|c|c|c|c|c|c|}
		\hline
		$\eta$ & 1 & 0.5 & 0.2 & 0.1 & 0.05 \\
		\hline	
		\hline	
		Acc. (\%) & 99.26 &99.30&99.35& 99.40 & 99.32 \\
		\hline
	\end{tabular}
	\label{tab:impact}
\end{table}

\textbf{Impact of parameters.} As discussed above, the proposed adaptive learning rate scaling has a hyper-parameter: $\eta$. We then test its impact on the accuracy of the student network by conducting the experiments on the MNIST dataset. We use LeNet-5-BN as the backbone of AdderNet. Other experimental settings are same as mentioned in Sec.~\ref{sec:clas}. It can be seen from Table~\ref{tab:impact} that the AdderNets trained utilizing the adaptive learning rate scaling achieves the highest accuracy (99.40\%) when $\eta$ = 0.1. Based on the above analysis, we keep the setting of hyper-parameters for the proposed method.

\section{Conclusions}\label{sec:conclu}
The role of classical convolutions used in deep CNNs is to measure the similarity between features and filters, and we are motivated to replace convolutions with more efficient similarity measure. We investigate the feasibility of replacing multiplications by additions in this work.  An AdderNet is explored to effectively use addition to build deep neural networks with low computational costs. This kind of networks calculate the $\ell_1$-norm distance between features and filters. Corresponding optimization method is developed by using regularized full-precision gradients. Experiments conducted on benchmark datasets show that AdderNets can well approximate the performance of CNNs with the same architectures, which could have a huge impact on future hardware design. Visualization results also demonstrate that the adder filters are promising to replace original convolution filters for computer vision tasks. In our future work, we will investigate quantization results for AdderNets to achieve higher speed-up and lower energy comsumption, as well as the generality of AdderNets not only for image classification but also for detection and segmentation tasks.

\section*{Acknowledgement}
We thank anonymous reviewers for their helpful comments. This work is supported by National Natural Science Foundation of China under Grant No. 61876007, 61872012, National Key R\&D Program of China (2019YFF0302902), Beijing Academy of Artificial Intelligence (BAAI), and Australian Research Council under Project DE-180101438.

{\small
\bibliographystyle{ieee_fullname}
\bibliography{ref}
}

\appendix

\section{Convergence of Sign and Full-precision Gradient}
AdderNets calculate the $\ell_1$ distance between the filter and the input feature, which can be formulated as 
\begin{equation}
\small
Y(m,n,t) = -\sum_{i=0}^{d}\sum_{j=0}^{d}\sum_{k=0}^{c_{in}} \vert X(m+i,n+j,k) - F(i,j,k,t)\vert.
\label{fcn:l1sup}
\end{equation}
The partial derivative of $Y$ with respect to the filters $F$ is:
\begin{equation}
\frac{\partial Y(m,n,t)}{\partial F(i,j,k,t)} = \mbox{sgn} (X(m+i,n+j,k) - F(i,j,k,t)),
\label{fcn:l1bpsup}
\end{equation}  
where $\mbox{sgn}(\cdot)$ denotes the sign function and the value of the gradient can only take +1, 0, or -1. Since Eq. (\ref{fcn:l1bpsup}) almost never takes the direction of steepest descent and the direction only gets worse as dimensionality grows, we propose to use the full-precision gradient:
\begin{equation}
\frac{\partial Y(m,n,t)}{\partial F(i,j,k,t)} =  X(m+i,n+j,k) - F(i,j,k,t).
\label{fcn:l2bpsup}
\end{equation} 

\begin{proposition}
	Denote an input patch as $x\in \mathbb{R}^n$ and a filter as $f\in \mathbb{R}^n$, the optimization problem is:
	\begin{equation}
	\mbox{arg}\min_f \vert x-f\vert.
	\label{optimsup}
	\end{equation} 
	Given a fixed learning rate $\alpha$, this problem basically cannot converge to the optimal value using sign grad (Eq. (~\ref{fcn:l1bpsup})) via gradient descent.
	\label{prop1sup}
\end{proposition}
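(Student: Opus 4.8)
The plan is to reduce the problem to one coordinate and show that the sign-gradient iteration, with a fixed step size $\alpha$, generically enters a two-cycle around the target rather than converging to it. First I would write $\vert x - f\vert = \sum_{i=1}^n \vert x_i - f_i\vert$, observe that the gradient descent update decouples across coordinates, and hence it suffices to analyze the scalar recursion $f_i^{j+1} = f_i^j - \alpha\,\mbox{sgn}(f_i^j - x_i)$. Without loss of generality assume $f_i^0 < x_i$; then while $f_i^j < x_i$ the iterate increases by exactly $\alpha$ each step, so $f_i^j = f_i^0 + j\alpha$, and there is a first index $t$ with $f_i^t < x_i \le f_i^{t+1}$.

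Next I would split on whether the iterate lands exactly on $x_i$. If $f_i^{t+1} = x_i$, equivalently $(x_i - f_i^0)/\alpha = t+1 \in \mathbb{Z}$, the gradient becomes $0$ and the iterate is fixed at the optimum. Otherwise $f_i^{t+1} > x_i$, and I would compute $f_i^{t+2} = f_i^{t+1} - \alpha\,\mbox{sgn}(f_i^{t+1}-x_i) = f_i^{t+1} - \alpha = f_i^t < x_i$; applying the map once more returns to $f_i^{t+1}$. So the orbit is eventually the two-cycle $\{f_i^t, f_i^{t+1}\}$ straddling $x_i$, and $f_i^j$ does not converge; its limit inferior and limit superior differ, leaving a residual error of size $\min(x_i - f_i^t,\ f_i^{t+1} - x_i) > 0$. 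Convergence of the single coordinate therefore holds if and only if $(x_i - f_i^0)/\alpha \in \mathbb{Z}$, and for the full vector $f$ to converge this must hold simultaneously for every coordinate — a measure-zero condition on generic real data $x$, $f^0$, $\alpha$, and one that only gets harder to satisfy as $n$ grows. This establishes that the problem "basically cannot" converge under the sign gradient.

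The main obstacle, such as it is, is mostly one of careful bookkeeping rather than depth: one must handle the boundary step cleanly (the transition from the monotone phase to the oscillatory phase), confirm that $\mbox{sgn}(0)=0$ so the exact-hit case is genuinely a fixed point, and state precisely what "cannot converge" means — namely non-convergence of the iterate sequence for all but an exceptional set of parameters, together with an explicit lower bound on the asymptotic error in the non-exceptional case. I would also note explicitly that the symmetric case $f_i^0 > x_i$ is handled identically by reflection, so the "without loss of generality" is legitimate. No further machinery is needed.
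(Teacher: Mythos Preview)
Your proposal is correct and follows essentially the same argument as the paper: decouple into coordinates, analyze the scalar sign-gradient recursion, show it increases by $\alpha$ until it overshoots, and then establish the two-cycle $\{f_i^t,f_i^{t+1}\}$ unless $(x_i-f_i^0)/\alpha\in\mathbb{Z}$, concluding that simultaneous convergence across all $n$ coordinates is a measure-zero event. If anything, your write-up is slightly tidier than the paper's in explicitly noting that $\mbox{sgn}(0)=0$ makes the exact-hit case a fixed point and in justifying the ``without loss of generality'' by reflection.
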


\begin{proof}
	The optimization problem~\ref{optimsup} can be rewritten as:
	\begin{equation}
	\mbox{arg}\min_{f_1,...,f_n} \sum_{i=1}^{n}\vert x_i-f_i\vert,
	\end{equation} 
	where $x=\left\{x_1,...,x_n\right\}, f=\left\{f_1,...,f_n\right\}$. The update of $f_i$ using gradient descent is:
	\begin{equation}
	f_i^{j+1} = f_i^{j} - \alpha  \mbox{sgn}(f_i^j-x_i),
	\end{equation} 
	where $f_i^j$ denotes the $f_i$ in $j$th iteration. Without loss of generality, we assume that $f_i^0<x_i$. So we have:
	\begin{equation}
	f_i^{j+1} = f_i^{j} + \alpha = f_i^{j-1} + 2\alpha  = ... = f_i^{0}+(j+1)\alpha,
	\end{equation} 
	when $f_i^{j}<x_i$. Denote $t = \mbox{arg}\max_j f_i^j<x_i$, we have $f_i^{t+1}>=x_i$. If $f_i^{t+1}=f_i^0+(t+1)\alpha=x_i$ (\ie $\frac{(x_i-f_i^0)}{\alpha}=t+1$), $|f_i-x_i|$ can converge to the optimal value 0. However, if $f_i^{t+1}>x_i$, we have 
	\begin{equation}
	\small
	f_i^{t+2} = f_i^{t+1} - \alpha  \mbox{sgn}(f_i^{t+1}-x_i) = f_i^0 + (t+1)\alpha - \alpha = f_i^t
	\end{equation} 
	Similarly, we have $f_i^{t+3}=f_i^{t+1}$. Therefore, the inequality holds:
	\begin{equation}
	f_i^{t+2k} = f_i^{t} < x_i <  f_i^{t+2k+1} ,k\in\mathbb{N}^+
	\end{equation} 
	which demonstrate that the $f_i$ cannot converge and have an error of $x_i-f_i^t$ or $x_i-f_i^t$. The $f_i^{j}$ can converge to $x_i$ if and only if $\frac{(x_i-f_i^0)}{\alpha}\in\mathbb{Z}$, which is a strict constraint since $x_i,f_i,\alpha\in\mathbb{R}$. Moreover, the $f$ can converge to $x$ if and only if $\frac{(x_i-f_i^0)}{\alpha}\in\mathbb{Z}$ for each $f_i\in f$. The difficulty of converge increases when the number $n$ grows. In neural networks, the dimension of filters is can be very large. Therefore, problem~\ref{optimsup} basically cannot converge to its optimal value.
\end{proof}

The aim of filters is to find the most relevant part of input features, which meets the goal of Eq. (\ref{optim}). The $\alpha$ (\ie the learning rate of neural networks) can be seen as fixed when using multi-step learning rate, which is widely used in the training. According to the Proposition 1, if we use the sign gradient, the AdderNets will achieve a poor performance. 

\begin{proposition}
	For the optimization peoblem~\ref{optimsup}, $f$ can converge to the optimal value using full-precision gradient (Eq. (\ref{fcn:l2bpsup})) with a fixed learning rate $\alpha$ via gradient descent when $\alpha<1$.
	\label{prop2}
\end{proposition}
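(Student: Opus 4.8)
The plan is to reduce Eq.~(\ref{optimsup}) to $n$ decoupled scalar problems and dispatch each one by the monotone convergence theorem. First I would rewrite the objective coordinate-wise as $\arg\min_{f_1,\dots,f_n}\sum_{i=1}^{n}|x_i-f_i|$, so that it suffices to show $f_i^{j}\to x_i$ for every $i$; summing then gives $|x-f^{j}|\to 0$, the optimal value. With the full-precision gradient of Eq.~(\ref{fcn:l2bpsup}), the gradient-descent iteration separates into the affine recursion
\begin{equation}
f_i^{j+1}=f_i^{j}-\alpha(f_i^{j}-x_i)=(1-\alpha)f_i^{j}+\alpha x_i .
\end{equation}
If $f_i^{0}=x_i$ there is nothing to prove, and the case $f_i^{0}>x_i$ is symmetric, so I would assume without loss of generality $f_i^{0}<x_i$.

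Second I would prove by induction on $j$ that $f_i^{j}<x_i$ and that $(f_i^{j})_j$ is increasing: since the (positive) learning rate satisfies $\alpha<1$, each iterate is a strict convex combination of $f_i^{j}$ and $x_i$, so $f_i^{j}<x_i$ forces $f_i^{j}<f_i^{j+1}<x_i$. Hence $(f_i^{j})_j$ is monotone and bounded above by $x_i$, and by the monotone convergence theorem $l:=\lim_{j\to\infty}f_i^{j}$ exists with $l\le x_i$.

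Third, I would exclude $l<x_i$ by contradiction (equivalently, by passing to the limit in the recursion). Assuming $l<x_i$, set $\epsilon=\alpha(x_i-l)>0$ and pick $k$ with $f_i^{k}>l-\epsilon$. Then
\begin{equation}
\begin{aligned}
f_i^{k+1}&=f_i^{k}+\alpha(x_i-f_i^{k})\ge f_i^{k}+\alpha(x_i-l)\\
&>(l-\epsilon)+\alpha(x_i-l)=l,
\end{aligned}
\end{equation}
which contradicts $f_i^{k+1}\le l$. Therefore $l=x_i$; as $i$ was arbitrary, $f^{j}\to x$, which is precisely the minimizer of Eq.~(\ref{optimsup}).

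There is no real obstacle here; the argument is essentially routine. The only delicate points are getting the direction of monotonicity right in both the $f_i^{0}<x_i$ and $f_i^{0}>x_i$ branches, and observing that $0<\alpha<1$ is exactly what puts $1-\alpha$ in $(0,1)$, so the residual contracts rather than overshoots. A one-line alternative, which I would mention only as a sanity check while keeping the monotone-convergence version to mirror the style of Proposition~\ref{prop1sup}, is to unroll the recursion to $f_i^{j}-x_i=(1-\alpha)^{j}(f_i^{0}-x_i)\to 0$. I would close by contrasting this with the sign-gradient case of Proposition~\ref{prop1sup}: the full-precision update rescales the residual by $1-\alpha$ at every step instead of shifting it by the fixed amount $\alpha$, so no commensurability condition on $(x_i-f_i^{0})/\alpha$ is needed and convergence holds for generic real-valued data.
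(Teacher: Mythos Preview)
Your proposal is correct and essentially identical to the paper's proof: the same coordinate-wise decoupling, the same affine recursion $f_i^{j+1}=(1-\alpha)f_i^{j}+\alpha x_i$, monotone convergence, and the same $\epsilon=\alpha(x_i-l)$ contradiction to rule out $l<x_i$. You even state the monotonicity direction correctly (the sequence is \emph{increasing} toward $x_i$ when $f_i^{0}<x_i$), which fixes a sign slip in the paper's printed version, and your unrolled alternative $f_i^{j}-x_i=(1-\alpha)^{j}(f_i^{0}-x_i)$ is a clean sanity check the paper does not include.
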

\begin{proof}
	The optimization problem~\ref{optimsup} can be rewritten as:
	\begin{equation}
	\mbox{arg}\min_{f_1,...,f_n} \sum_{i=1}^{n}\vert x_i-f_i\vert,
	\end{equation} 
	where $x=\left\{x_1,...,x_n\right\}, f=\left\{f_1,...,f_n\right\}$. The update of $f_i$ using gradient descent is:
	\begin{equation}
	f_i^{j+1} = f_i^{j} - \alpha (f_i^j-x_i),
	\end{equation} 
	where $f_i^j$ denotes the $f_i$ in $j$th iteration. If $f_i^j<x_i$, then we have the inequality:
	\begin{equation}
	f_i^{j+1} = f_i^{j} - \alpha (f_i^j-x_i) = (1-\alpha)f_i^j + \alpha x_i < x_i, 
	\end{equation} 
	and $f_i^{j+1}<f_i^j$. Without loss of generality, we assume that $f_i^0<x_i$. Then $f_i^j$ is monotone and bounded with respect to $j$, so the limit of $f_i^j$ exists and $\lim_{j \to +\infty } f_i^j\leq x_i$. Assume that $\lim_{j \to +\infty} f_i^j = l < x_i$. For $\epsilon = \alpha (x_i-l) $, there exists $k$ subject to $l-f_i^k<\epsilon$. Then we have:
	\begin{equation}
	\begin{aligned}
	f_i^{k+1}&= f_i^k+\alpha(x_i-f_i^k)\geq
	f_i^k+\alpha(x_i-l)\\&> l-\epsilon + alpha(x_i-l) = l,
	\end{aligned}
	\end{equation}
	which is a contradiction. Therefore, $\lim_{j \to +\infty} f_i^j \geq x_i$. Finally, we have $\lim_{j \to +\infty} f_i^j = x_i$, \ie $f$ can converge to the optimal value.
\end{proof}
Therefore, by utilizing the full-precision gradient, the filters can be updated precisely.

\section{Relationship Between $\ell_2$-norm and Cross-correlation}

In the main body, we propose to use a partial derivative in AdderNets, which is a clipped version of $\ell_2$-distance. Therefore, we further discuss using the $\ell_2$-distance in AdderNets instead of $\ell_1$-distance. By calculating $\ell_2$ distance between the filter and the input feature, the filters in $\ell_2$-AdderNets can be reformulated as 
\begin{equation}
\small
Y(m,n,t) = -\sum_{i=0}^{d}\sum_{j=0}^{d}\sum_{k=0}^{c_{in}} \big[ X(m+i,n+j,k) - F(i,j,k,t)\big]^2.
\label{fcn:l2}
\end{equation} 
We also use the adaptive learning rate for the $\ell_2$-AdderNets, since the magnitude of the gradient w.r.t $X$ in $\ell_2$-AdderNets would also be small. Table~\ref{tab:ImageNet} shows the classification results on the ImageNet dataset. The $\ell_2$-AdderNet can achieve almost the same accuracy with CNN. In fact, the output of the  $\ell_2$-AdderNets can be calculated as 
\begin{equation}
\scriptsize
\begin{aligned}
Y_{\ell_2}(m,n,t) =& -\sum_{i=0}^{d}\sum_{j=0}^{d}\sum_{k=0}^{c_{in}} \big[ X(m+i,n+j,k) - F(i,j,k,t)\big]^2\\
=&\sum_{i=0}^{d}\sum_{j=0}^{d}\sum_{k=0}^{c_{in}} \big[ 2  X(m+i,n+j,k)\times F(i,j,k,t)\\
&- X(m+i,n+j,k)^2 - F(i,j,k,t)^2\big]\\
=& 2Y_{CNN}(m,n,t)- \sum_{i=0}^{d}\sum_{j=0}^{d}\sum_{k=0}^{c_{in}} \big[ X(m+i,n+j,k)^2\\
& + F(i,j,k,t)^2\big].
\end{aligned}
\label{fcn:equ}
\end{equation} 
$\sum_{i=0}^{d}\sum_{j=0}^{d}\sum_{k=0}^{c_{in}}F(i,j,k,t)^2$ is same for each channel (\ie each fixed $t$). $\sum_{i=0}^{d}\sum_{j=0}^{d}\sum_{k=0}^{c_{in}} X(m+i,n+j,k)^2$ is the $\ell_2$-norm of each input patch. If this term is same for each patch, the output of $\ell_2$-AdderNet can be seen as a linear transformation of the output of CNN. Although this assumption may not always be valid, the result in Table~\ref{tab:ImageNetl2} that the performance of $\ell_2$-AdderNet and CNN are similar indicates that $\ell_2$-distance and cross-correlation have same ability to extract the information from the inputs.

\begin{table}[t]
	\centering
	\small
	\caption{Classification results on the ImageNet dataset using ResNet-18 model.}
	\begin{tabular}{|c|c|c|c|c|}
		\hline
		\textbf{Method} & \textbf{\#Mul.} & \textbf{\#Add.} & \textbf{Top-1 Acc.} & \textbf{Top-5 Acc.} \\
		\hline	
		\hline	
		$\ell_2$-AddNN & 1.8G & 3.6G &  69.6\% &  89.0\% \\
		\hline	
		$\ell_1$-AddNN & 0 & 3.6G & 66.8\% & 87.4\% \\
		\hline	
		CNN &  1.8G& 1.8G& 69.8\% & 89.1\% \\
		\hline
	\end{tabular}
	\label{tab:ImageNetl2}
	\vspace{-1.0em}
\end{table}

\end{document}